%%%%%%%%%%%%%%%%%%%%%%%%%%%%%%%%%%%%%%%%%%%%%%%%%%%%%%%%%%%%%%%%%%%%%%%%%%%%%%%%
%2345678901234567890123456789012345678901234567890123456789012345678901234567890
%        1         2         3         4         5         6         7         8

\documentclass[letterpaper, 10 pt, conference]{ieeeconf}  % Comment this line out if you need a4paper

\IEEEoverridecommandlockouts                              % This command is only needed if 
                                                          % you want to use the \thanks command

\overrideIEEEmargins                                      % Needed to meet printer requirements.

%In case you encounter the following error:
%Error 1010 The PDF file may be corrupt (unable to open PDF file) OR
%Error 1000 An error occurred while parsing a contents stream. Unable to analyze the PDF file.
%This is a known problem with pdfLaTeX conversion filter. The file cannot be opened with acrobat reader
%Please use one of the alternatives below to circumvent this error by uncommenting one or the other
%\pdfobjcompresslevel=0
%\pdfminorversion=4

% See the \addtolength command later in the file to balance the column lengths
% on the last page of the document

% The following packages can be found on http:\\www.ctan.org
\usepackage{graphicx} % for pdf, bitmapped graphics files
\usepackage{amsmath} % assumes amsmath package installed
\usepackage{amssymb}  % assumes amsmath package installed
\usepackage{url}
\usepackage[hidelinks]{hyperref}
\usepackage[ruled,vlined]{algorithm2e}
\LinesNumbered
\urlstyle{same}

\SetKwIF{If}{ElseIf}{Else}{if}{}{else if}{else}{end if}%
\SetKwFor{While}{while}{}{}%
\SetKwFor{For}{for}{}{}%
\SetKwFor{If}{if}{}{}%
\SetKwRepeat{Do}{do}{while}
\SetKw{KwGoTo}{go to}

\graphicspath{ {./img/} }

\newtheorem{definition}{Definition}
\newtheorem{lemma}{Lemma}
\newtheorem{corollary}{Corollary}

\title{\LARGE \bf
Learning a Model for Inferring a Spatial Road Lane Network Graph using Self-Supervision
}

\author{Robin Karlsson$^{1,2*}$, David Robert Wong$^{1,2}$, Simon Thompson$^{1}$, and Kazuya Takeda$^{1,2,3}$%
\thanks{$^{1}$Tier IV Inc., Tokyo, Japan.}%
\thanks{$^{2}$Institute of Innovation for Future Society, Nagoya University, Aichi, Japan.}%
\thanks{$^{3}$Graduate School of Informatics, Nagoya University, Aichi, Japan.}%
\thanks{*Corresponding author: {\tt\small robin.karlsson@tier4.jp}}%
\thanks{Code repository: \url{https://github.com/tier4/road_lane_network_graph_open}}%
}

%{\tt\small Link provided upon publication}}

\begin{document}

\maketitle
\thispagestyle{empty}
\pagestyle{empty}

%%%%%%%%%%%%%%%%%%%%%%%%%%%%%%%%%%%%%%%%%%%%%%%%%%%%%%%%%%%%%%%%%%%%%%%%%%%%%%%%
\begin{abstract}

Interconnected road lanes are a central concept for navigating urban roads. Currently, most autonomous vehicles rely on preconstructed lane maps as designing an algorithmic model is difficult. However, the generation and maintenance of such maps is costly and hinders large-scale adoption of autonomous vehicle technology. This paper presents the first self-supervised learning method to train a model to infer a spatially grounded lane-level road network graph based on a dense segmented representation of the road scene generated from onboard sensors. A formal road lane network model is presented and proves that any structured road scene can be represented by a directed acyclic graph of at most depth three while retaining the notion of intersection regions, and that this is the most compressed representation. The formal model is implemented by a hybrid neural and search-based model, utilizing a novel barrier function loss formulation for robust learning from partial labels. Experiments are conducted for all common road intersection layouts. Results show that the model can generalize to new road layouts, unlike previous approaches, demonstrating its potential for real-world application as a practical learning-based lane-level map generator.

\end{abstract}

%%%%%%%%%%%%%%%%%%%%%%%%%%%%%%%%%%%%%%%%%%%%%%%%%%%%%%%%%%%%%%%%%%%%%%%%%%%%%%%%
\section{INTRODUCTION}

% Concept of road lanes. AVs rely on human annotated maps for connectivity. Large-region maps unclear. Slow down progress. Will need to traverse unmapped roads. Failsafe benefit
The concept of road lanes is central for safe and efficient sharing of the road by multiple road users, and by knowing how lanes are connected, it is possible to navigate the road according to traffic rules and conventions using existing motion planning methods \cite{Paden2016, Claussmann2019}. However, inferring the directional connectivity between lanes is a non-trivial task.

Most autonomous vehicle (AV) systems rely on pre-constructed, high-definition (HD) maps \cite{Sheif2016} for tasks such as ego-localization and path planning. These maps use a graphical representation of the road scene to enable sensor-based localization \cite{Kuutti2018, Wolcott2017, Levinson2010}, efficient rule-based methods \cite{Urmson2009, Nilsson2016} or learned methods \cite{Bansal2018, Schulz2018} for motion planning, as well as being advantageous for the prediction task \cite{Djuric2020}. Mapping is still a challenging aspect of AV deployment in the real world, with limited consensus on what kind of HD map to use and even what data and sensors should be employed in the localization and path planning pipeline \cite{Sheif2016}. The variety of map types, and the fact that HD map construction still usually relies on labor-intensive human annotation, make the mapping process difficult to scale. It remains unclear when, how, and if large regions will be mapped with lane-level information for commercial use, thus tending to limit the adoption of AV technology to small geofenced and predetermined routes. In addition, the dependence on the HD map in the ego-localization and motion planning pipeline renders the vehicle unusable when either an HD map is unavailable for a particular road to be traversed, or when a failure in any of the map loading or localization systems occurs. This can also become apparent when changes in road layout, construction, and environmental changes affect the performance of localization and planning. These problems illustrate the importance of AV technology which allows a vehicle to navigate encountered road scenes based on information from onboard sensors only.

\begin{figure}[t]
  \centering
  \includegraphics[width=0.49\textwidth]{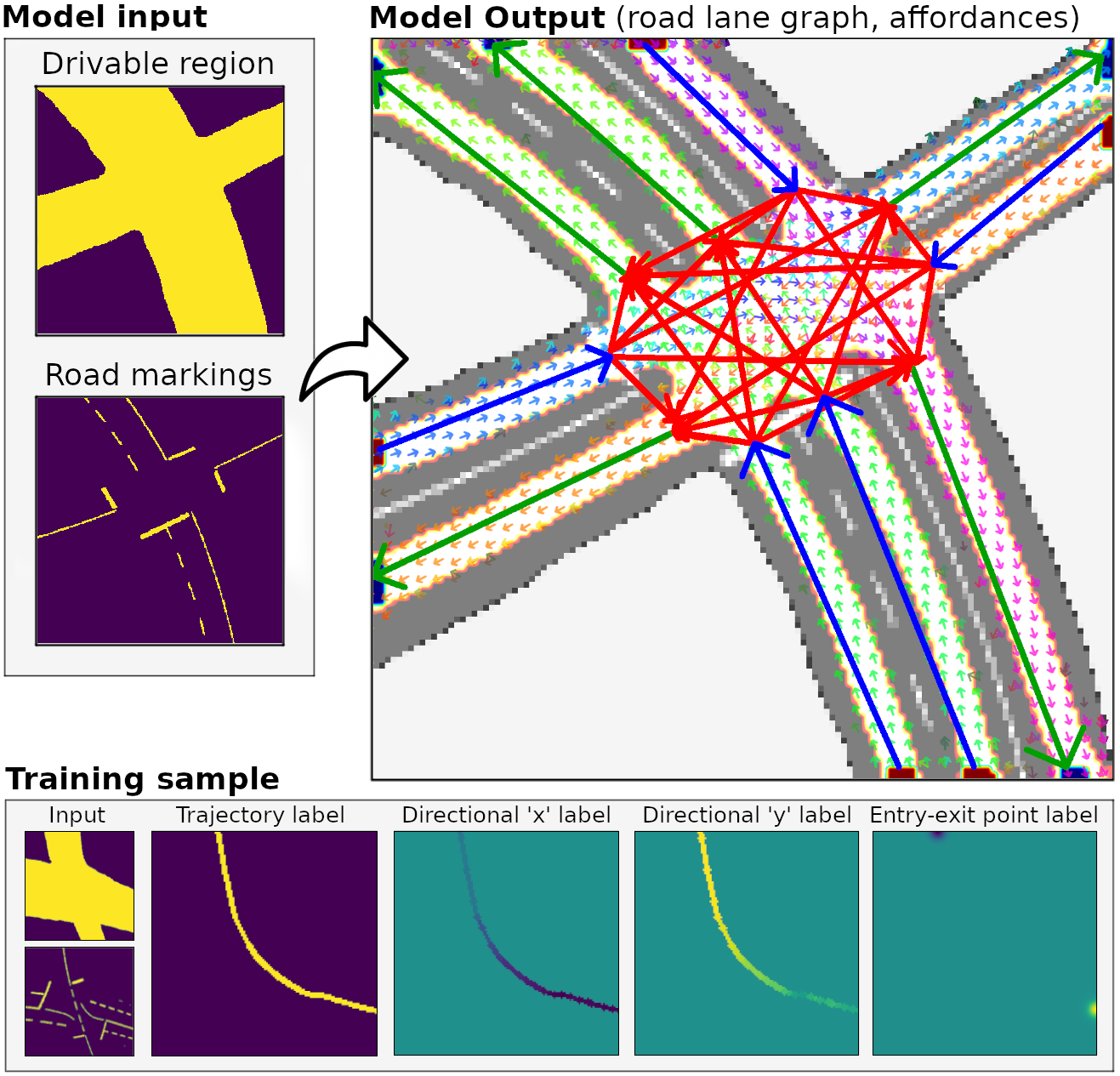}
  \caption{The hybrid neural and search-based graph model infers the spatial connectivity of road lanes (graph output) and directional lane affordance (dense affordance map output) for a road scene represented by drivable regions and road markings. Training samples consist of two input layers coupled with one example trajectory label represented as four dense layers. A self-supervised learning method enables the model to learn to infer the complete road lane network graph from single trajectory examples. The model generalizes to new road layouts and partially observed road scenes.}
  \label{fig:gdsla_promo_img}
\end{figure}

% Humans infer road lanes from context and experience. Designing an algorithm difficult. Feature variance and noise. 
Unlike conventional AVs, humans can infer road lanes and their directional connectivity based on contextual features observed in the road scene, having learned to generalize patterns observed from example trajectories in similar road scenes. However, designing an algorithmic model to infer the lane-level road network graph based on observable features is difficult due to regional differences, local feature variation, and noise, as exemplified by varying or missing road markings, curbs, road geometries, and surface materials.

% Self-supervised method. Feature invariant representation. Road network Graph. Onboard sensor data. Training data automatically generated.
This paper presents the first self-supervised learning method for training a model to infer a spatially grounded directional road lane network graph based on a dense segmented representation of the road scene that can be generated online from onboard sensor data~\cite{Salzmann2019}. The graph can have many uses; as an online substitute for an HD map, to evaluate the consistency of the onboard map with the observed environment, as well as repairing the map. Training samples consist of single trajectories driven through a road scene, which can be automatically generated from human driving data without human labeling effort, allowing learning from an easily obtainable large and varied real-world dataset.

Our novel contributions are threefold:
\begin{itemize}
 \item The first self-supervised learning method for training a model to output a \emph{spatially grounded road lane network graph} based on dense segmentation maps representing the road scene. The method learns how to infer the complete graph from single isolated driving trajectory examples without requiring human labeling effort.
 \item A novel \emph{barrier function loss formulation for robust learning from partial labels}, demonstrating practically perfect true positive road lane output (99.93\% test accuracy), while keeping the number of false positives low, resulting in distinctly separated lanes.
 \item A \emph{formal road lane network model} proving that any structured road scene can be represented by a directed acyclic graph (DAG) of at most depth three while retaining the notion of intersection regions, and that this is the most compressed representation. The presented hybrid model implements the formal model.

\end{itemize}

\section{Related work}
\label{sec:related_work}

\subsection{Automatic and semi-automatic lane-level map generation}
\label{sec:automatic_lane_level_map_generation}

Recent works have presented approaches for reducing the amount of human labor needed for map generation. Iesaki et al.~\cite{Iesaki2019} presented a method for generating polynomial curves between intersection road lanes, fitted according to a learned cost function. Zhao et al.~\cite{Zhao2019} presented a SLAM-based method to generate a closed vector map including road lanes. Guo et al.~\cite{Guo2016} presented a method for generating a lane-level road network graph based on superimposed vehicle trajectories and road markings. Road lane connectivity in intersections is inferred from trajectories, and connecting paths are fitted using a heuristic. Neither of these approaches learns contextual features corresponding to lane connectivity, and thus cannot generalize beyond the road scenes the models have been trained on. Additionally, previous models as represented by \cite{Guo2016} tend to use heuristics that depend on a particular feature such as road markings. Our work extends upon these studies by introducing a method that can generalize to new road scenes by learning general contextual features extracted from online sensor data.

Homayounfar et al.~\cite{Homayounfar2018} presented a fully automatic approach for inferring discrete road lanes in highway road scenes as polylines using a recurrent neural network model. An extension of this work~\cite{Homayounfar2019} also introduces forking and merging road lane topology, resulting in a DAG road lane model. Our work further extends their work by demonstrating applicability to road scenes including complex intersections, meaning our approach is applicable for any general structured road scene. Additional merits of our method are that it is self-supervised, formally proved to infer the most compressed representation of the road lane network graph, as well as better suitability for real-time application by not being a recursive model.

\subsection{HD map free road nagivation}

Related works include Salzmann et al.~\cite{Salzmann2019} who trained a probabilistic CNN model to output dense ego-vehicle path affordance from recorded driving trajectories with a weighted mask loss. Amini et al.~\cite{Amini2019} demonstrated a mapless driving approach for urban roads using a variational neural network (NN) model to navigate an AV.  Ort et al.~\cite{Ort2018} demonstrated an approach for topologically simple rural roads based on combining driving trajectories with a sparse map containing road-level traffic rules. P\'{e}rez-Higueras et al.~\cite{PerezHigueras2018} trained a CNN model to infer a multimodal path conditioned on a start and goal point in a grid map environment. Training labels consist of human example trajectories and the output is used to guide an RRT* planner. Barnes et al.~\cite{Barnes2016} demonstrated a self-supervised learning method to train a CNN model to generate dense multimodal paths for an ego-vehicle. While these methods have been shown successful in relatively unconstrained environments, it is not clear how to apply these methods to complicated intersections requiring navigation constrained by traffic rules. Our model adds to overcoming these limitations by inferring a discrete graphical representation that can be used by conventional motion planning algorithms.

\subsection{Road scene understanding}

Recent studies include Wang et al.~\cite{Wang2019} who trained a model to generate top-down semantic representations of the road scene for high-level decision making. However, the representation is not spatially grounded nor does it contain lane connectivity and thus cannot be used directly for local planning. Kunze et al.~\cite{Kunze2018} presented a model for generating a hierarchical lane-level scene graph based on road marking and curb detection. The graph is not spatially grounded and thus has the same limitations as \cite{Wang2019}. Geiger et al.~\cite{Geiger2014} presented a model which infers a dense semantic directional lane affordance representation of the road scene and object poses, based on image sequences of other vehicles moving in the scene. However, the output fidelity is low and relies on observing human driving trajectories in the operating environment instead of learning contextual features as in our proposed model.

\section{Hybrid model implementation}
\label{sec:road_network_inference_model}

One can define many ways to represent the road lane network graph. For example, Homayounfar et al.~\cite{Homayounfar2019} represents the graph densely as sequentially inferred equidistant points. On the other extreme, one can imagine only connecting road lane entry and exit points to represent the graph.

To answer the question "what is the best representation", we present a formal road lane network graph model capable of expressing the road lane connectivity of any structured road scene. The formal model is found to correspond to a directed acyclic graph (DAG) of maximum depth 3, and is formally proved to be the most compressed graph able to represent the lane network of any road scene while retaining the notion of intersection regions. The formal model along with proofs are presented in the Appendix.

The formal road lane network model is implemented as a hybrid two-stage model depicted as a block diagram in Fig.~\ref{fig:hybrid_model}. The flow of the hybrid model is as follows. First, dense input features representing the road scene are generated. The input features are feed to a neural deep learning model (Sec.~\ref{sec:neural_model}), which outputs a set of dense road lane affordance maps, a multimodal directional field, and road lane entry and exit points. Finally, a road lane graph describing the global connectivity of the road scene is generated by a search-based graph generation model (Sec.~\ref{sec:graph_model}) constrained by the dense affordance map output.

\begin{figure}
\centering
\includegraphics[width=0.48\textwidth]{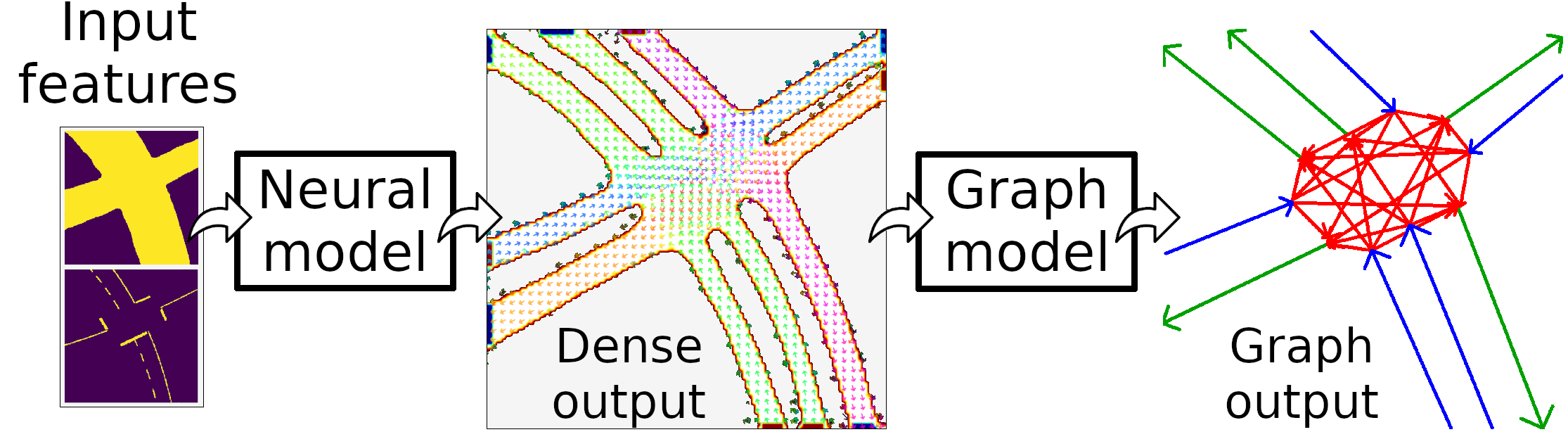}
\caption{The hybrid model consists of two sequential models. The first neural model (Sec.~\ref{sec:neural_model}) takes in a set of input features and outputs a set of dense affordances representing road lanes, multimodal directionality, and road lane entry and exit points. The second graph model (Sec.~\ref{sec:graph_model}) generates a road lane network graph based on the dense output.}
\label{fig:hybrid_model}
\end{figure}

% Dense segmented input. Maneuver point -> 2D Gaussian
The input road scene representation consists of a two-layered 256x256 segmented drivable region and road marking top-down view tensor, spanning a region large enough to encompass a single intersection as exemplified by Fig.~\ref{fig:gdsla_promo_img}. The value of each grid map point $(i,j)$ is a probabilistic measure where positive, unknown, and negative observations equal 1, 0.5, and 0, respectively. Previous work demonstrates that such input representations can be generated in real-time based on onboard camera and/or lidar data \cite{Salzmann2019}.

\section{Neural model}
\label{sec:neural_model}

The first part of the hybrid model depicted in Fig.~\ref{fig:hybrid_model} generates dense affordance map output that is used by the graph model (Sec.~\ref{sec:graph_model}) to generate the road lane network graph. Input features representing dense semantic information of the road scene are fed to the model, which outputs a dense intermediate representation of the directional road lanes, predicting the local structure of the road scene. The neural model is based on the Directional Soft Lane Affordance (DSLA) model for road scene understanding published by Karlsson et al.~\cite{Karlsson2020}. The multi-task network architecture consists of a single encoder-decoder network with separate output heads for each output type.

Three dense affordance networks heads each output a probability value $y_{i,j}$ for every grid map point $(i,j)$ being part of a road lane, lane entry point, and lane exit point. Directionality is modeled by a probabilistic von Mises mixture density network, predicting the directionality of the road at $(i,j)$ as a multimodal directional distribution $p_{i,j}(\theta)$ for $\theta~\in~[0, 2 \pi]$. Three directional components are found to be sufficient for representing road lane directionality. However, the model can be modified to output more modalities or even an infinite number of modalities through a sampling-based implementation \cite{Prokudin2018}. Each directional mode is visualized as a small arrow in Fig.~\ref{fig:gdsla_promo_img}.

\subsection{Self-supervised neural training process}
\label{sec:self_supervised_training_process}

This section describes the self-supervised method used to train the neural model. Self-supervision is implied in the sense that the training data is generated automatically without requiring human labeling, as similarly demonstrated by Nava et al.~\cite{Nava2019} and Barns et al.~\cite{Barnes2016}. Self-supervised methods possess considerable real-world application advantages over conventional supervised methods, as the cost and human effort of obtaining training data can be orders of magnitude lower compared with human-annotated labels.

The partial label tensor consists of a 4x128x128 tensor; a trajectory label encoded by a binary mask, two directional 'x' and 'y' labels encoding trajectory direction as element-wise unit vectors $(\hat{n}_x, \hat{n}_y)$, as well as the trajectory entry and exit locations represented as dense points. Labels can be generated from driving data by pairing a representation of the road scene with a path of a vehicle going through it. A training sample is visualized in Fig.~\ref{fig:gdsla_promo_img}.

\subsubsection{Dense affordance learning}

The dense affordance networks that predict road lane, entry point, and exit point affordances $y$ from partial labels $\hat{y}$ are trained by a novel barrier function loss formulation given in Eq.~(\ref{eq:barrier_function_loss}). The new loss function $L_{aff}$ displays favorable characteristics for learning categorical predictions from partial labels compared with the masked L2 loss formulation used in~\cite{Karlsson2020}, such as improved generalization ability and lack of occasional discontinuities in road lane affordance map output.

\begin{equation} \label{eq:barrier_function_loss}
 L_{aff} = \frac{1}{N} \left ( \sum_{i,j}  \lvert \hat{y} - y \rvert - \alpha \: CE(y,\hat{y}) \right)
\end{equation}

The underlying principle of the barrier loss formulation is that the model prediction elements $y_{i,j}$ should at least always correspond to known positive label elements $\hat{y}_{i,j}$, and as such the model is heavily penalized by a theoretically infinite loss when mispredicting elements encompassed by the partial label. This is achieved by multiplying the cross entropy term by a large value $\alpha = 10^5$. Gradient clipping is utilized to prevent exploding gradients. The unknown elements of the label $\hat{y}$ penalizes the model output $y$ through an L1 loss term, which is known to be inherently robust to uncertainty and noisy labels~\cite{Manwani2013}. In case both positive and negative partial label elements are available, an additional cross entropy term for the known negative elements can be added.

Parallels can be drawn between the process of learning with a barrier function as loss, and the version space algorithm in symbolic machine learning~\cite{Mitchell1982} that reduces an a priori set of all possible hypotheses to a subset of hypotheses satisfying all observed examples. For the case of learning road lane affordance with partially known positive labels (i.e. an example trajectory), and lack of any known negative labels, the model initially learns to trivially predict all elements as positive, analogous to starting with the initial set of all possible hypotheses in the version space algorithm. The L1 loss component gradually drives the model output towards a default prediction value which is negative in this work, without mispredicting known positives and thus avoids incurring a large penalty by the barrier function loss, which is analogous to gradual hypothesis reduction in the version space algorithm. Further mathematical analysis of this intuitive similarity is believed to reveal why the proposed barrier function loss formulation is so effective at learning and generalizing from partial labels compared with simply increasing the overall learning rate.

\subsubsection{Directional affordance learning}

The second loss formulation involves minimizing the KL divergence loss for every grid map point $(i,j)$ encompassed by the trajectory label following~\cite{Karlsson2020}.

\begin{equation} \label{eq:da_loss}
 L_{DA} = \frac{1}{N} \sum_{i,j \in mask} D_{KL} ( p(\theta)_{i,j} || \hat{p} (\theta)_{i,j} )
\end{equation}

The multimodal directional distribution prediction $p(\theta)_{i,j}$ is generated by three output network heads. The directional label is obtained from the example trajectory, resulting in a monomodal distribution $\hat{p}(\theta)_{i,j}$, approximating the true multimodal directionality at the point.

\section{Graph model}
\label{sec:graph_model}

The final part of the hybrid model depicted in Fig.~\ref{fig:hybrid_model} generates the spatially grounded discrete road lane network graph based on the dense affordance predicted by the neural model. First, a weighted adjacency matrix $A$ is generated, representing the navigational constraints imposed by the dense affordance output, defining the cost function for the A$^*$ search algorithm~\cite{Hart1968} used to search for valid paths connecting entrance and exit points. Secondly, all found paths are decomposed into a set of entrance, intersection, and exit lanes by unifying paths corresponding to the same road lane. The resulting graph is a DAG of maximum depth three, satisfying the requirements stipulated by the formal model defined in the Appendix. The rest of this section explains each step in more detail.

\begin{figure}[t]
\centering
\includegraphics[width=0.25\textwidth]{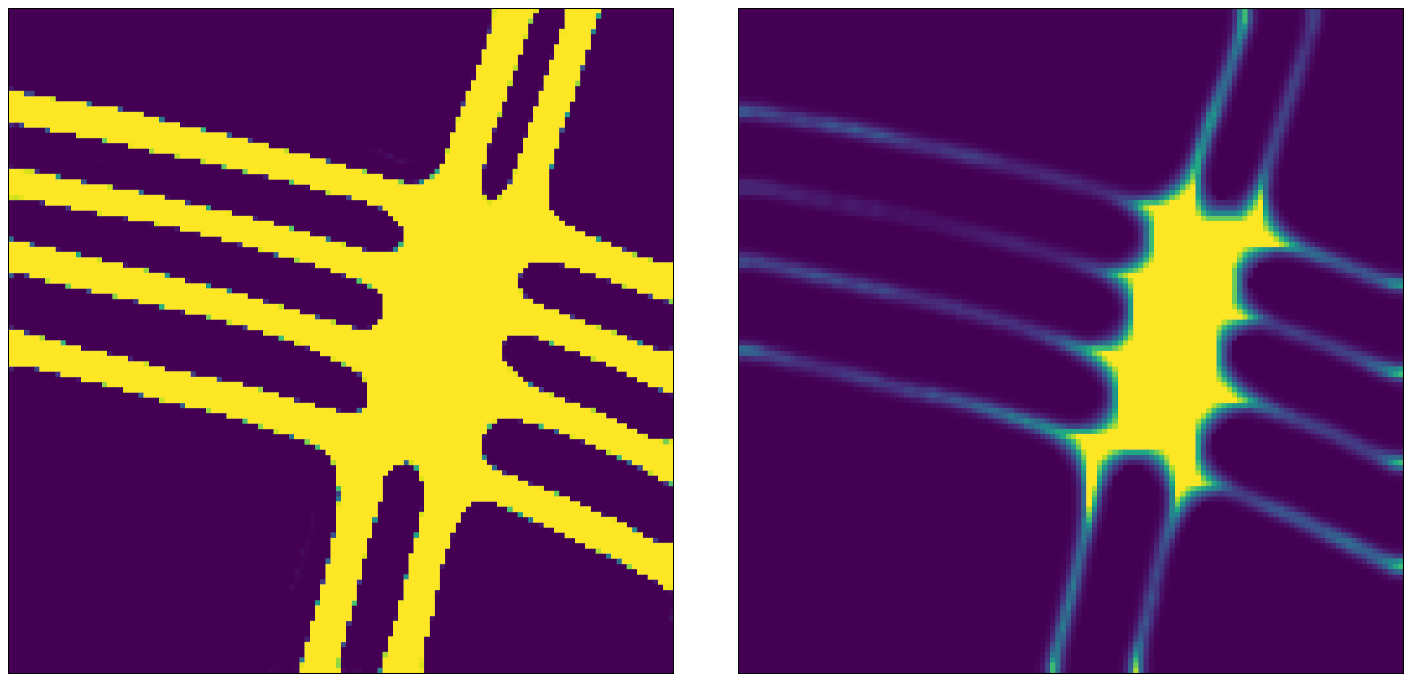}
\caption{To ease the unification of paths corresponding to the same road lane, the dense road lane affordance map output is smoothed, so that all optimal paths found by a search algorithm coincide with lane centers.}
\label{fig:smoothed_sla}
\end{figure}

\begin{algorithm}[t]
\label{alg:road_lane_network_graph_generation}
\SetAlgoLined
 \SetKwInOut{Input}{Input}
 \SetKwInOut{Output}{Output}
 
 \Input{Set of entry and exit points $Q_{entry}$, $Q_{exit}$, weighted adjacency matrix $A$}
 
 Initialize empty path sets $P_{entry}$, $P_{inter}$, $P_{exit}$, $P_{con}$
 
 \tcp{Find entry paths}
 
 \For{All entry points $q_{entry} \in Q_{entry}$}{
 
  $P_{entry\_tree} = \{\}$
  
  \For{All exit points $q_{exit} \in Q_{exit}$}{
  
   Path $p$ = A$^{*}$($q_{entry}$, $q_{exit}$, $A$)
   
   \If{$p$ exists}{
   
    Add $p$ to $P_{entry\_tree}$
   
   }
  }
  $p_{entry}$, $P^{*}_{con}$= unify($P_{entry\_tree}$)
  
  Add $p_{entry} \rightarrow P_{entry}$
  
  Add $P^{*}_{con} \rightarrow P_{con}$
 }
 
 \tcp{Find intersection, exit paths}
 
 \For{All exit points $q_{exit} \in Q_{exit}$}{
 
  $P_{exit\_tree} \leftarrow$ Paths in $P_{con}$ ending at exit point
  
  $p_{exit}$, $P^{*}_{inter}$= reverse\_unify($P_{exit\_tree}$)
  
  Add $p_{exit} \rightarrow P_{exit}$
  
  Add $P^{*}_{inter} \rightarrow P_{inter}$
  
 }
 
 Construct graph $G$ from $P_{entry}$, $P_{inter}$, $P_{exit}$

 \Output{Road lane network graph $G = (V, E)$}
 
 \caption{Road lane network graph generation}
\end{algorithm}

\subsection{Weighted adjacency matrix generation}
%2. Generate weighted adjacency matrices according to outputted road lane affordance and dense directionality
%2.1. 8-way connected square grid adjacency matrix with infinite transition cost between all neighbors
%2.2. Apply smoothing to lane affordance ==> penalize paths close to borders to keep paths close to lane centers
%2.3. For all grid points (i,j) considered road lane points, reset edge weight if neighbor is 1) reachable (neighbor direction same as dense direction) and 2) road lane
%     ==> Edge weight cost according to smoothed SLA intensity
The weighted adjacency matrix $A$ defines the cost function for the A$^*$ search algorithm. The generation of $A$ starts with an 8-directional 2D grid map adjacency matrix that is initialized with infinite weight assigned to all edges. The edge weights $e_{A,B}$ are subsequently reset according to the dense affordance output as follows.

First, the dense road lane affordance output $y_{i,j}$ of each grid map point $(i,j)$ is thresholded to 0 and 1 and subsequently smoothed by an 8x8 kernel and transformed by the function$ f(\tilde{y}) = \tilde{y}^8$, so that optimal paths found by the search algorithm will coincide with lane centers. The effect of smoothing is displayed in Fig.~\ref{fig:smoothed_sla}. Next, for each gird map point $A$ at $(i,j)$, a neigboring point $B$ is reachable, if the direction $\overrightarrow{A \: B}$ is within an angle $\Delta \theta$ of any dense directional affordance components $\{\theta_1, \theta_2, \theta_3\}$ predicted by the neural model at point $(i,j)$. The edge weight to all reachable neighbors is reset according to~Eq.~(\ref{eq:edge_weight}).

\begin{equation}
\label{eq:edge_weight}
 e_{A,B} = \lvert \overrightarrow{A \: B} \rvert - \log{\tilde{y}_{B}}
\end{equation}

\subsection{Search-based path algorithm} The rest of this section explains the search-based algorithm which finds paths between road lane entry and exit points with $A$ as the cost function, and eventually decomposes the found paths into a road lane network graph $G$ corresponding to the formal model defined in the Appendix. First, a set of discrete entry and exit points $Q_{entry}$ and $Q_{exit}$ are computed from the dense entry and exit affordance map output of the neural model, determined as the center of momentum coordinate $(i,j)$ of the contour constituting each separable cluster of dense output.

The road lane network graph $G$ computation is summarized in Algorithm~\ref{alg:road_lane_network_graph_generation} and explained as follows. Three empty path sets $P_{entry}$, $P_{inter}$, $P_{exit}$ are initialized (line 1), denoting the set of entry, intersection, and exit paths, each corresponding to the equivalent set of entry, intersection, and exit lanes defined in the formal model. This decomposition is proved to be possible for any structured road scene.

For each entry point $q_{entry}$ (line 2), the paths to all reachable exit points $q_{exit}$ are found using the A$^*$ search algorithm with the weighted adjacency matrix $A$ as the cost function. All such paths are rooted at a single entry point and thus constitute a tree $P_{entry\_tree}$ (lines 3-7). The tree is unified from the root up until the first point where the tree diverges into separate branches, meaning all paths in the tree are modified to consist of the same entry path (line 8). The unification algorithm returns the common entry path $p_{entry}$ and a new set of connecting paths $P^{*}_{con}$ forming a tree rooted at the end of $p_{entry}$. These paths are added to the maintained sets (lines 9-10). Repeating these steps for all entrance points results in the set $P_{entry}$ constituting the paths of all entry lanes in the road scene. The metric used for tree divergence is the total angle $\Theta = \sum^{N-1}_{i=0}{\Delta \theta_{i,i+1}}$ spanned between all $N$ path directions arranged in counterclockwise order. The largest angle $\Delta \theta_{N-1, N}$ is excluded as it represents the angle \textit{not} spanned by the path directions. Path directionality is represented by a vector $\overrightarrow{p_{t} \: p_{t+\Delta t}}$ spanned by a point $p_{t}$ and a future point $p_{t + \Delta t}$ on the same path. This vector approximates the future directionality given a suitable number of lookahead steps $\Delta t$. In this work $\Delta t = 6$. 

The next part of Algorithm~\ref{alg:road_lane_network_graph_generation} applies the same process in the reverse direction. For each exit point $q_{exit}$ (line 11), all paths ending at $q_{exit}$ form a reverse path tree rooted at $q_{exit}$. After reversing all path directions, the same unification algorithm can be applied to unify the reverse tree from the end to the point where all paths converge into a single path, returning the common exit path $p_{exit}$ and a new set of paths $P^*_{inter}$ forming a reverse tree into the start of $p_{exit}$ (line 13). These paths are added to the maintained sets (lines 14-15).  Repeating these steps for all exit points results in the set $P_{exit}$ constituting the paths of all exit lanes in the road scene. Additionally, the remaining set of pruned paths $P_{inter}$ constitute all paths between connected entry and exit paths.

The road network graph $G$ is constructed by letting the first and last point of every path $p \in P_{entry}$ represent an entry and fork vertex sharing a directed edge. Similarly, the first and last point of every path $p \in P_{exit}$ represents a merge and exit vertex sharing a directed edge. The first and last point of all paths $p \in P_{inter}$ represents directed edges between a pair of fork and merge vertices, or in other words, intersection connectivity. The formal model is proved to be able to represent any structured road scene, and as $G$ is an arbitrary but particular instance of the formal model, by universal modus ponens, $G$ can represent any structured road scene given the dense affordance output by the neural model correctly represent the local structure of the road scene. Note that the method does not only infer connectivity between road lanes, but also infers intuitive spatial locations for fork and merge points, and consequently, the spatial extent of intersections.

\section{Experiments}
\label{sec:experiments}

\begin{figure}[t]
  \centering
  \includegraphics[width=0.45\textwidth]{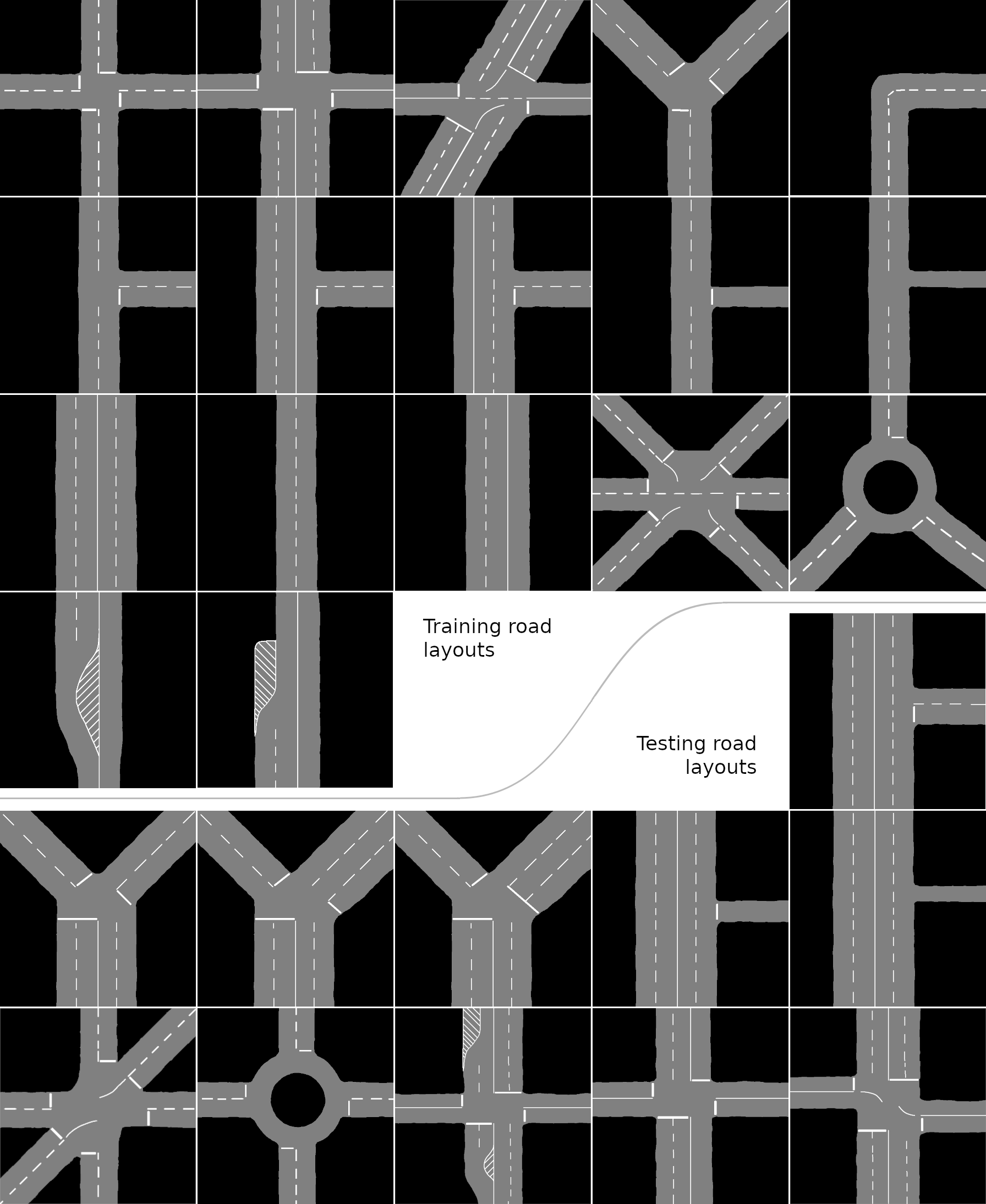}
  \caption{The upper group of 17 training road layouts are used for training and represent all common intersection layouts, encompassing the training distribution $p_{train}(x)$. The bottom 11 testing road layouts consist of variations of the upper group, encompassing the testing distribution $p_{test}(x)$. Ability to generalize to new intersections is measured by performance on $p_{test}(x)$. An actual inference and training sample is shown in Fig.~\ref{fig:gdsla_promo_img}.}
  \label{fig:road_layouts}
\end{figure}

\begin{figure*}[t]
  \includegraphics[width=\textwidth]{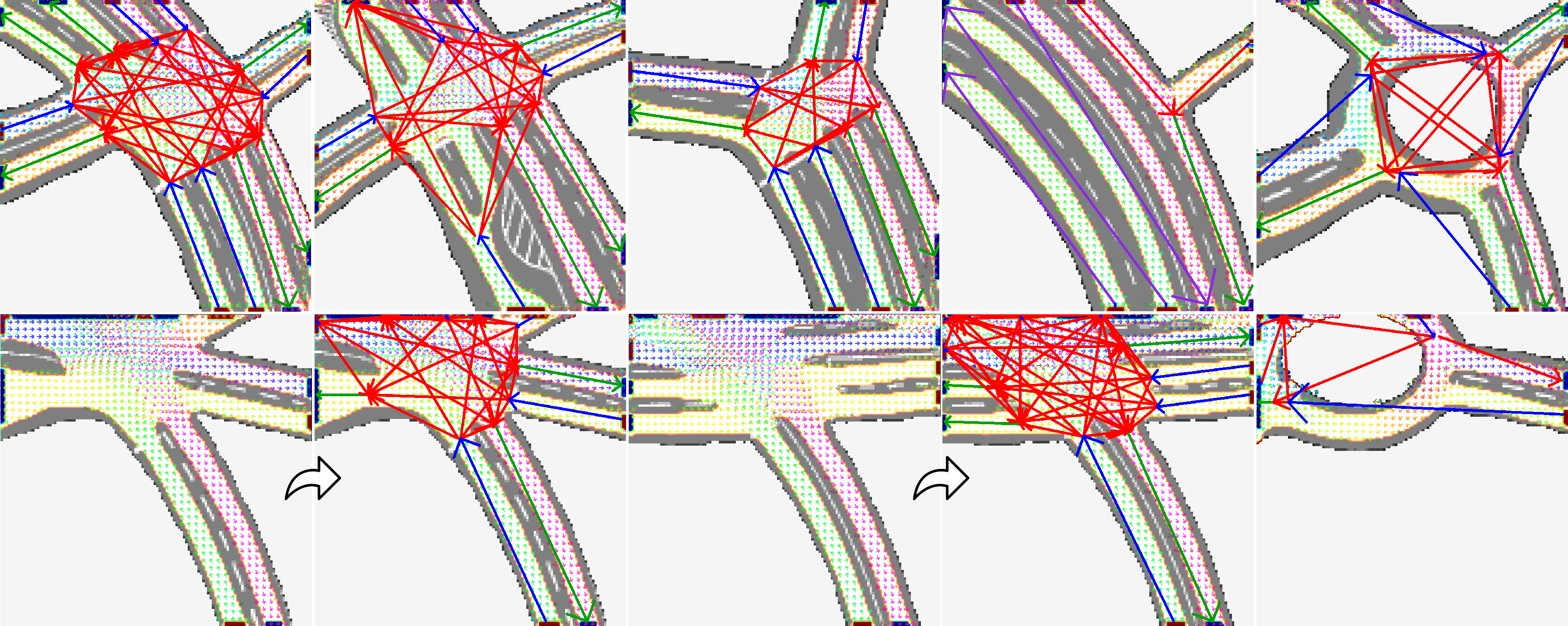}
  \caption{Test sample output visualizations. The top row shows visualizations of correctly inferred road lane network graphs with entry, intersection, exit, and non-intersection edges. The graphs are superimposed on the dense road lane affordance (white), entry and exit point affordance (red and blue), and directional affordance (small arrows). The bottom row demonstrates the method's robustness to partially observed intersection road scenes.}
  \label{fig:model_visualization}
\end{figure*}

\begin{figure}[t!]
  \centering
  \includegraphics[width=0.39\textwidth]{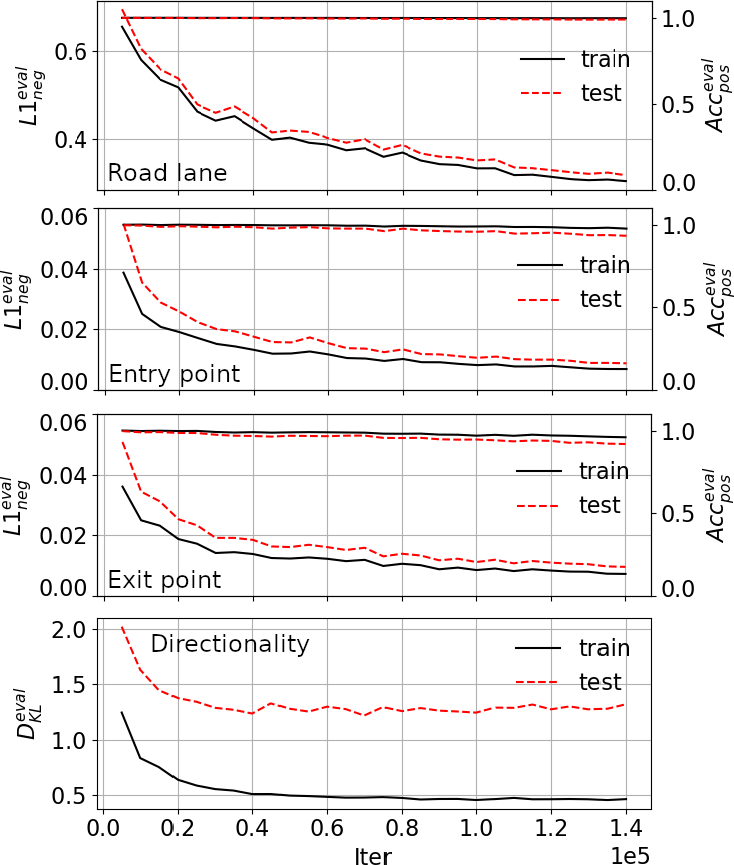}
  \caption{Neural model learning performance on the train and test road layout samples. The three upper plots show dense affordance prediction performance (i.e. road lane, entry, and exit point prediction). The upper nearly constant lines denote $Acc^{eval}_{pos}$ values. Dense affordance performance improves consistently with further training with test performance closely tracking training performance. The bottom plot shows directional affordance prediction indicating a larger gap between training and testing performance, which converges after 40,000 iterations. Metrics are explained in Sec~\ref{sec:performance_metrics}.}
  \label{fig:result_plots}
\end{figure}

The model is trained and tested on randomly generated samples drawn from two data distributions $p_{train}(x)$ and $p_{test}(x)$ representing the two sets of artificially created road layouts visualized in Fig.~\ref{fig:road_layouts}. Training samples are generated by randomly augmenting road layouts from $p_{train}(x)$ together with one example trajectory as visualized in Fig.~\ref{fig:gdsla_promo_img}. $p_{train}(x)$ is intended to cover all fundamental components of intersection layouts~\cite{MIRE}. Evaluation samples are generated from road layouts in both $p_{train}(x)$ and $p_{test}(x)$ by the same augmentation process, but with all valid trajectories superimposed. Generalization ability is evaluated by how well the model performs on samples drawn from $p_{test}(x)$, consisting of road layouts with similar components as in $p_{train}(x)$ but with new combinations, varying number of road lanes, and asymetric structure.

The neural model is trained on an Nvidia RTX 3090 GPU using stochastic gradient descent (SGD) with mini-batches of 28 samples and polynomial learning rate decay, reducing the initial learning rate 10$^{-3}$ to 10$^{-5}$ over 180,000 iterations with power 0.9.

\subsection{Data augmentation}

Data augmentation is a critical part of training the neural model to be invariant to particular road geometries as all training examples are unique, and thus improve generalization ability. This work follows the data augmentation approach presented in \cite{Karlsson2020} for generating randomly translated, rotated, and warped road scene samples.

\subsection{Performance metrics}
\label{sec:performance_metrics}

This work presents two new metrics $Acc^{eval}_{pos}$ and $L1^{eval}_{neg}$ to evaluate the performance of dense road lane, entry and exit point affordance predictions. Directional affordance $D^{eval}_{KL}$ is evaluated by computing KL divergence between the model output and evaluation label~\cite{Karlsson2020}. A set of 20 pregenerated samples per road layout is used for evaluation.

Desirable dense road lane affordance predictions have two requirements; all true positive elements should be predicted positive, and the less false positives the better. The former requirement can be represented by $Acc^{eval}_{pos}$ as the ratio of all true positive elements actually predicted positive (i.e. $y_{i,j} > 0.5$). The later requirement can be quantified by the linear L1 metric $L1^{eval}_{neg} = \sum^N_i (\lvert \hat{y_i} - y_i \rvert) / N $ for $i \in $ set of negative evaluation label elements.

The topological correctness of a road lane network graph $G$ is measured in terms of the number of missing or erroneous edges compared with the graph generated from the evaluation sample label.

\section{Results}
\label{sec:results}

Output visualizations of testing road layouts sampled from $p_{test}(x)$ are shown in Fig.~\ref{fig:gdsla_promo_img} and Fig.~\ref{fig:model_visualization}. The results demonstrate how the neural model has learned a generalized local representation of road scenes, which enables the graph algorithm to infer the global connectivity for new road scenes, including asymmetric intersections and one-way roads. In Fig.~\ref{fig:model_visualization} white heatmap regions represent predicted dense road lane affordance, The red and blue colored blobs represent predicted dense entry and exit points. The small arrows represents predicted local multimodal directionality. The road lane network graph is superimposed over the dense output, with entry edges colored blue, intersection edges colored red, exit edges colored green, and non-intersection lanes reduced to a single purple edge. Results are visualized by a neural model trained for 100,000 iterations, which takes about 26 hours on our hardware.

Fig.~\ref{fig:result_plots} provides a quantitative evaluation of training and generalization performance of the neural model output. The dense affordances learned by the barrier function loss formulation Eq.~(\ref{eq:barrier_function_loss}) results in comparable performance between train and test samples and continue to improve with further training iterations beyond 140,000 iterations, indicating strong generalization and thus learning of sound abstract representations of road scene structures. On the other hand, directional affordance performance converges relatively early after about 40,000 iterations and there is a larger gap between training and testing performance, both indicating that there is room for further improvements in directional modeling.

The result of road lane network graph evaluation is that 337/340 (99.1\%) training layouts are error free. Two errors are related to scenes with disconnected merging lanes. For testing layouts, 199/220 (90.5\%) samples are error free. 14 failure cases are related to inconsistent directional affordance in asymmetric intersection layouts, such as one-way roadways and lanes with ambiguous directionality. The other 7 failures are for the forking lane intersection where the forked road lane is disconnected or otherwise improperly masked out and obstructs the search algorithm.

As explained in Sec.~\ref{sec:automatic_lane_level_map_generation}, our proposed method is the first self-supervised approach for road lane network learning, and extending prior methods to become self-supervised is nontrivial, making direct comparison difficult.

\section{Conclusions}
\label{sec:conclusions}

This work presents the first self-supervised method for learning a hybrid neural and search-based model to infer a spatially grounded graphical representation of the road lane network, based on a dense segmented input representation of the road scene generatable from sensor data as input. The model is formally proved to be the most compressed representation of the road lane network while retaining the notion of intersection regions. Training data can be automatically and cost-effectively generated from driving data. Data augmentation enables the model to learn invariance to particular geometries and instead learn representations of general structures of road layouts. Experiments demonstrate the model successfully generalizing to new intersection layouts not encountered during training, therefore advancing the state of the art in automatic road lane network modeling.

Experiment results demonstrate the potential of our approach to enable conventional HD map dependent AV motion planning and prediction systems to operate in the absence of human annotated HD map information, by utilizing artificial intelligence to generate a discrete graphical representation of the road lane network based on onboard sensor data.

Future work includes adding a symbolic learning and reasoning step to refine the road lane network graph and resolve ambiguity associated with highly asymmetric road scenes, as well as refine edges to conform with explicit and implicit traffic rules, such as considering traffic signs and generally prohibiting U-turns. The method is to be demonstrated using real-world data. Previous works \cite{Salzmann2019, Amini2019, PerezHigueras2018, Barnes2016, Wang2019} have demonstrated that CNN-based navigation model approaches can work with noisy and partially occluded real-world input representations of urban road scenes, and thus it is reasonable to expect our method and experimental results to also transfer to real-world inputs.

\section*{APPENDIX: Formal road lane network model}
\label{appendix:formal_road_lane_network_model}

The road lane network model is a formal model to represent the road lane connectivity of any structured road scene. The model is formally proved to be the most compressed representation retaining the notion of intersection regions.

\begin{definition}[Road scene]
 The approximate representation of the external world, to the extent relevant for the navigation task, is defined as the road scene~\cite{Ulbrich2015}. This representation includes a description of the spatial configuration and semantics of static and dynamic elements constituting the scene, such as drivable regions and road markings.
\end{definition}

\begin{definition}[Road lane network]
 The directional connectivity between lanes in a road scenes can be mathematically represented as directed graph $G(V, E)$. The set of vertices $V = \{ v_1 , \dots, v_N\}$ constitute structurally representative points where lanes enter and exit the road scene (i.e. \textit{entry} and \textit{exit vertices}), as well as internal points associated with intersection. The set of directed edges $E = \{ e_1, \dots, e_M \}$ represents the connectivity between the structural points.
\end{definition}

\begin{definition}[Intersection]
 An intersection or junction denotes a part of the road scene where incoming road lanes branch and connect with lanes from other entry points. In other words, intersections present a navigational branching choice. All road lanes exiting the intersection at the same point are merged into a single exit lane.
 
 Mathematically, an intersection is a subgraph $H$ of the road lane network graph $G$ consisting of at least one \textit{fork vertices} with outdegree $deg^{-}{(v)} \ge 2$ (i.e. two or more outgoing edges) or \textit{merge vertices} with indegree $deg^{+}(v) \ge 2$ (i.e. two or more incoming edges). If both fork and merge vertices exist, then for every fork vertex there exists at least one edge to some merge vertex and vice versa. A road lane network can contain multiple intersections if and only if the intersections are disjoint subgraphs.
\end{definition}

\begin{lemma}
\label{lemma:one_intersection}
 Any road scene can be spatially reduced so that every road lane is part of only one intersection while still being a sufficient representation for the navigation task.
\end{lemma}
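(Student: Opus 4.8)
The plan is to leverage the two structural facts already fixed by the definitions: that distinct intersections are \emph{disjoint} subgraphs of $G$, and that every vertex of $G$ is a spatially grounded structural point. Together these let me isolate a single intersection inside a bounded spatial window, and then recover global navigability by showing that any route through the full scene factors into a sequence of such single-intersection windows.

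First I would dispose of the trivial case: if the road scene induces zero or one intersection, every road lane is already part of at most one intersection and no reduction is required. So assume $G$ contains intersections $H_1, \dots, H_k$ with $k \ge 2$. By the Intersection definition these subgraphs are pairwise disjoint, sharing neither vertices nor edges, so no single fork or merge vertex belongs to two of them. Because each vertex is spatially grounded, each $H_i$ occupies a bounded region, and disjointness forces any two of these regions to be joined only by stretches of road that contain no fork and no merge vertex — purely sequential lane segments offering no branching choice.

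Next I would define the spatial reduction explicitly. I would place a spatial boundary inside each such fork-free and merge-free intervening segment and crop the scene to the window delimited by these boundaries that contains a single intersection $H_i$; the other intersections $H_j$, $j \ne i$, fall outside. Any physical road lane that previously threaded through more than one intersection is severed at a boundary, so inside the window it terminates at an entry or exit point instead of continuing into a neighbouring intersection. Hence every road lane in the reduced scene connects an entry point, through $H_i$, to an exit point, and so is part of at most one intersection, as the lemma requires.

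The hard part will be justifying that the reduced representation is still \emph{sufficient for the navigation task}, since cropping has discarded the other intersections. My plan is to phrase sufficiency as preservation of the feasible-route set under boundary matching. Every feasible global route crosses the intervening segments transversally and therefore decomposes into a finite sequence of single-intersection traversals, one per window, in which the exit point of one window is identified with the entry point of the next; conversely, concatenating any compatible sequence of single-window routes reconstructs a feasible global route. Because the cuts lie in segments that contain no fork or merge vertex, no branching decision is ever lost by the reduction, so the realisable routes are preserved up to this chaining. The main obstacle is thus not the graph combinatorics but formalising the informal navigation requirement; once sufficiency is identified with this route-decomposability, the lemma follows, and the reduction to single-intersection scenes is justified.
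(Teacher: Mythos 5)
Your proposal is correct in substance but takes a genuinely different route from the paper. The paper's own proof is a two-sentence informal argument: it observes that a human driver who perceives the road one intersection at a time can successfully navigate any road scene, and then concludes (by what it calls modus ponens) that a sufficiently intelligent AV system using a representation encompassing only a single intersection per road lane can do the same. In other words, the paper grounds ``sufficient for the navigation task'' in an extrinsic, empirical claim about human capability and never touches the graph or spatial structure at all. You instead give a constructive argument: you use the definitional disjointness of intersections to show that distinct intersections can only be joined by fork-free and merge-free lane segments, place the spatial cuts inside those segments, and identify sufficiency with preservation of the feasible-route set under boundary matching and chaining of single-window traversals. What your approach buys is an actual reduction procedure and a concrete, checkable sufficiency criterion --- since no branching decision lies inside a cut segment, routes decompose and recompose losslessly --- which is, in effect, the structural fact that silently underwrites the paper's appeal to human driving. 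What the paper's approach buys is brevity and an escape from formalizing ``the navigation task,'' which remains the one informal hinge in your argument as well; you correctly flag this and propose route-decomposability as its formalization. Both arguments establish the lemma, but yours is the more self-contained and rigorous of the two.
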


\begin{proof}
 A human driver perceiving the road one intersection at a time can successfully navigate any road lane of any road scene. Therefore, by modus ponens, a sufficiently intelligent AV system utilizing a road scene representation encompassing only a single intersection per road lane can also successfully navigate any road scene.
\end{proof}

\begin{lemma}
 \label{lemma:dag_depth_3}
 The road lane network for any road scene can be represented by a directed acyclic graph (DAG) of at most depth three.
\end{lemma}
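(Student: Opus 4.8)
The plan is to prove the bound constructively: for an arbitrary road scene I would exhibit a concrete four-layer directed graph that reproduces its lane connectivity and whose longest directed path has exactly three edges. First I would apply Lemma~\ref{lemma:one_intersection} to reduce the scene so that every road lane lies in a single intersection; the disjointness clause of the Intersection definition then splits the network into disjoint intersection subgraphs, so it suffices to treat one intersection $H$ in isolation and take the disjoint union at the end.

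Next I would isolate what $H$ must encode. The navigationally relevant content of an intersection is the reachability relation between the points where lanes enter and leave the scene, i.e.\ a bipartite relation $R$ between \emph{entry} vertices (sources) and \emph{exit} vertices (sinks), together with the location of the branching and merging that the Intersection definition associates with forks ($\deg^{-}(v)\ge 2$) and merges ($\deg^{+}(v)\ge 2$). I would then realise $R$ by the four-layer graph underlying the unify/reverse\_unify steps of Algorithm~\ref{alg:road_lane_network_graph_generation}: assign one \emph{fork} vertex to each entry (the first point at which routes leaving that entry diverge) and one \emph{merge} vertex to each exit (the last point at which routes reaching that exit converge), and insert an \emph{entry edge} entry$\to$fork, an \emph{exit edge} merge$\to$exit, and an \emph{intersection edge} fork$\to$merge exactly when the corresponding entry reaches the corresponding exit in $H$. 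Degenerate unit-degree forks or merges are inserted for lanes that neither branch nor merge, so that every route still factors as entry$\to$fork$\to$merge$\to$exit.

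I would then discharge the two required properties. Placing entry, fork, merge and exit vertices on layers $0,1,2,3$, every edge introduced above increases the layer index by exactly one, so the layer index is a topological order, the graph is acyclic, and the longest directed path meets one vertex per layer and hence has three edges, i.e.\ depth three. For faithfulness I would check that the contraction preserves reachability in both directions: each original entry-to-exit route projects to a unique entry$\to$fork$\to$merge$\to$exit path, and each such four-vertex path is witnessed by at least one route, so $R$ is reproduced exactly.

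The step I expect to be the main obstacle is justifying that a single fork layer and a single merge layer always suffice, that is, eliminating fork$\to$fork chains, merge$\to$merge chains, and merge$\to$fork interleavings inside one $H$. For nested forks and merges I would collapse them by wiring the single first-divergence fork of each entry directly to every merge it can reach; this preserves the entry-to-exit reachability that the road lane network is required to encode, at the cost of identifying the shared interior segments with the intersection edges. For a merge followed downstream by a fork I would argue that such a fork is a fresh navigational branching choice and therefore a second intersection, which contradicts the one-intersection-per-lane reduction of Lemma~\ref{lemma:one_intersection} together with the disjointness clause; this is the delicate point, since it is where the informal ``single intersection'' hypothesis must be converted into the combinatorial statement that no directed path alternates branch--merge--branch.
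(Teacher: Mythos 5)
Your proof is correct (to the standard of rigor this appendix operates at) and shares the paper's skeleton---every lane factors as entry$\to$fork$\to$merge$\to$exit, with Lemma~\ref{lemma:one_intersection} confining each lane to a single intersection---but your execution is genuinely different. The paper's proof is declarative: it asserts directly from the intersection definition that each incoming road lane is associated with at most one fork node and one merge node, and obtains acyclicity by noting that every child vertex is strictly closer to an exit vertex; it constructs nothing and never addresses the point you identify as the main obstacle. That point is real: the stated definition only requires each fork to have \emph{some} edge to a merge and vice versa, so a chain fork$\to$merge$\to$fork$\to$merge formally satisfies it and would have depth five. Your constructive contraction (first-divergence fork per entry, last-convergence merge per exit, intersection edges given by reachability, acyclicity from the layer ordering, plus an explicit faithfulness check on the bipartite entry--exit reachability relation) disposes of fork$\to$fork and merge$\to$merge chains by collapsing them---which is precisely what unify and reverse\_unify in Algorithm~\ref{alg:road_lane_network_graph_generation} do operationally, so your proof also explains why the implementation realizes the formal model, a connection the paper's proof leaves implicit. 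The residual delicacy you flag, turning ``a fork downstream of a merge is a new intersection'' into a combinatorial exclusion, is the same unproved step the paper's proof takes silently; you at least name it and localize it to Lemma~\ref{lemma:one_intersection} together with the disjointness clause of the intersection definition. What the paper's version buys is brevity; what yours buys is an actual construction, an explicit criterion for what ``represents'' means (preservation of the reachability relation), and visibility of exactly where the definitions would need tightening for the lemma to become fully formal.
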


\begin{proof}
 Suppose a directed graph $G = (V, E)$ represents the road lane network graph of a particular but arbitrary road scene.
 
 % Depth 3
 By the definition of a road lane network, all valid road lanes begin at an entry node $v_{entry}$ and end at an exit node $v_{exit}$. By Lemma~\ref{lemma:one_intersection}, all valid road lanes can be a part of only one intersection. By the definition of an intersection, each incoming road lane is associated with one fork node $v_{fork}$ and a merge node $v_{merge}$, or only one of the nodes. Thus, any valid road lane originating from any road scene entry point can be expressed by at most four vertices, connected by three directional edges.
 
 % Acyclic
 Additionally, as every child vertex is strictly closer to an exit vertex than its parent vertex, every path through the graph decreases the distance to an exit vertex, and the graph is acyclic. Therefore, the directed graph $G$ has a depth of at most three edges and is acyclic.
\end{proof}

\begin{corollary}
 Any road lane that is part of an intersection can be decomposed into three edges; an \textit{entry edge} from an entry vertex to a fork vertex, an \textit{intersection edge} from a fork vertex to a merge vertex, and an \textit{exit edge} from a merge vertex to an exit vertex.
\end{corollary}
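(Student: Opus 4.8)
The plan is to obtain this corollary as an immediate specialization of Lemma~\ref{lemma:dag_depth_3}. First I would fix an arbitrary road lane belonging to an intersection and apply Lemma~\ref{lemma:dag_depth_3}, which guarantees that the road lane network, and therefore this particular lane, is expressible as a DAG of depth at most three. Because every child vertex is strictly closer to an exit vertex than its parent, the vertices encountered along the lane occur in a fixed distance-to-exit order, so the lane traverses at most the four vertices $v_{entry}, v_{fork}, v_{merge}, v_{exit}$ joined by three directed edges.

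Next I would identify these four vertices concretely using the definition of an intersection. Since the lane is part of an intersection, it must meet a fork vertex ($deg^{-}(v)\ge 2$), where the lane branches toward distinct exits, a merge vertex ($deg^{+}(v)\ge 2$), where lanes originating from distinct entries coalesce, or both. Naming the edge from $v_{entry}$ to $v_{fork}$ the entry edge, the edge from $v_{fork}$ to $v_{merge}$ the intersection edge, and the edge from $v_{merge}$ to $v_{exit}$ the exit edge then yields exactly the three-edge decomposition asserted in the statement.

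The main obstacle I anticipate is the degenerate situation already flagged in the proof of Lemma~\ref{lemma:dag_depth_3}, namely a lane that carries only a fork or only a merge vertex rather than both. To preserve the uniform three-edge form I would argue that the absent branching vertex can be reinstated as a unit-degree placeholder sited at the structural boundary of the intersection region, leaving connectivity, acyclicity, and the single-intersection reduction of Lemma~\ref{lemma:one_intersection} untouched. Under this convention every intersection lane admits the full $v_{entry}\to v_{fork}\to v_{merge}\to v_{exit}$ representation, which completes the argument.
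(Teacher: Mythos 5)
Your main line of argument coincides with the paper's: the paper states this corollary without a separate proof, treating it as an immediate specialization of Lemma~\ref{lemma:dag_depth_3}, whose proof already introduces the four vertices $v_{entry}$, $v_{fork}$, $v_{merge}$, $v_{exit}$ and the three connecting edges; naming those edges the \emph{entry}, \emph{intersection}, and \emph{exit} edge is all that remains, which is exactly what your first two paragraphs do.

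Where you diverge, and run into trouble, is the degenerate case. Your fix of reinstating the absent fork or merge vertex as a ``unit-degree placeholder'' is incompatible with the paper's definitions: a merge vertex must have indegree $deg^{+}(v) \ge 2$ and a fork vertex outdegree $deg^{-}(v) \ge 2$, so a placeholder with one incoming and one outgoing edge is neither, and the middle edge of your padded path is then not an ``intersection edge from a fork vertex to a merge vertex'' as the statement requires. The padding also works against the compression claim the appendix is building toward: the paper deliberately does not force such lanes into a three-edge form, but instead handles the fork-only or merge-only situation in a separate corollary on \emph{point intersections}, which are represented by a DAG of depth two and explicitly declared the most compact representation for that case. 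The correct scoping is therefore that the three-edge decomposition applies to lanes whose intersection contains both a fork and a merge vertex (the intersection definition guarantees an edge between them in that case), while the degenerate case is treated separately rather than absorbed by inserting dummy vertices.
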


\begin{corollary}
 A road lane not part of an intersection can be represented by a single directional edge.
\end{corollary}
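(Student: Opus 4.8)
The plan is to argue directly from the definitions, using the contrapositive characterization of a lane that is \emph{not} part of an intersection. First I would invoke the definition of a road lane network to fix the endpoints: any valid road lane begins at an entry vertex $v_{entry}$ and terminates at an exit vertex $v_{exit}$, so the lane is realized as some directed path from $v_{entry}$ to $v_{exit}$ through $G$. Lemma~\ref{lemma:dag_depth_3} guarantees that $G$ is acyclic, so this path is simple and finite.

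Next I would extract the structural consequence of the lane avoiding every intersection. By the definition of an intersection, membership requires a fork vertex with outdegree $deg^{-}(v) \ge 2$ or a merge vertex with indegree $deg^{+}(v) \ge 2$. If the lane belongs to no intersection, then no vertex along its path meets either condition, so every vertex interior to the path satisfies $deg^{-}(v) \le 1$ and $deg^{+}(v) \le 1$. Since each interior vertex lies on the path it has at least one incoming and one outgoing edge, forcing $deg^{+}(v) = deg^{-}(v) = 1$: it is a pure pass-through point offering no navigational branching choice.

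The crucial step is to argue that these pass-through vertices are eliminable. By the definition of the road lane network, vertices are meant to denote \emph{structurally representative} points — namely entry/exit points or points associated with intersections. An interior degree-$(1,1)$ vertex is neither, hence carries no structural information. I would formalize its removal as edge contraction: contracting the two edges incident to such a vertex fuses them into a single directed edge while preserving both endpoints and the common direction. Iterating this contraction over all interior vertices collapses the entire path to a single directed edge from $v_{entry}$ to $v_{exit}$, which is the claimed representation.

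The main obstacle I anticipate is the justification that contraction preserves the \emph{semantics} of the road lane network, i.e. that discarding intermediate points loses no navigationally relevant information. This rests entirely on the definitional stance that vertices encode only structural branching points; once that is granted, the contraction argument is routine. A secondary subtlety — ensuring the contraction cannot introduce a self-loop or fail to terminate — is already settled by the acyclicity from Lemma~\ref{lemma:dag_depth_3}, so the procedure is well defined and halts after finitely many steps.
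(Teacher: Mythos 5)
Your proposal is correct, but it does more work than the paper needs. The paper states this corollary without an explicit proof because, under its Definition of the road lane network, the vertex set $V$ contains \emph{only} structurally representative points: entry vertices, exit vertices, and intersection-associated (fork/merge) vertices. Hence a lane that is part of no intersection touches no fork or merge vertex, and its only vertices are its entry and exit vertex — the single directed edge $v_{entry} \rightarrow v_{exit}$ is immediate, a one-line consequence of the definitions together with the endpoint observation in the proof of Lemma~\ref{lemma:dag_depth_3}. Your route instead works in a strictly more general setting: you allow the lane's path to pass through intermediate vertices, show each interior vertex must be a degree-$(1,1)$ pass-through point (correctly using the paper's convention that $deg^{-}$ is outdegree and $deg^{+}$ is indegree), and then eliminate them by iterated edge contraction, with acyclicity from Lemma~\ref{lemma:dag_depth_3} guaranteeing termination without self-loops. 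What this buys is robustness: your argument also covers graph representations that do contain non-structural waypoints (such as the densely sampled A$^{*}$ paths in the paper's hybrid graph model, whose ``unification'' step is operationally exactly your contraction), whereas the paper's argument leans entirely on its definitional restriction of $V$. The obstacle you flag — justifying that contraction preserves navigational semantics — is precisely the point the paper settles by fiat in its definition, so under the paper's model your extra machinery is sound but unnecessary.
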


\begin{corollary}
 A special case of intersections is the branching of a single road lane, or merging of two road lanes, constituting a \textit{point intersection} represented by a DAG of depth two, which is the most compact representation.
\end{corollary}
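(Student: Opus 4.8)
The plan is to prove the statement in two parts: first, that a point intersection admits a directed acyclic graph (DAG) representation of depth two; and second, that no representation of smaller depth can retain the notion of an intersection region, which establishes depth two as the most compact choice.

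For the first part, I would specialize the decomposition obtained in the proof of Lemma~\ref{lemma:dag_depth_3}, where a general intersection road lane is expressed as the four-vertex path $v_{entry} \to v_{fork} \to v_{merge} \to v_{exit}$ spanning three edges. By the given definition, a point intersection is exactly the degenerate case in which only a fork vertex is present (the branching of a single road lane) or only a merge vertex is present (the merging of two road lanes). In the branching case the merge vertex is absent, so the fork connects directly to the exit vertex and every realizing path has the form $v_{entry} \to v_{fork} \to v_{exit}$; symmetrically, in the merging case the fork vertex is absent and every realizing path has the form $v_{entry} \to v_{merge} \to v_{exit}$. Each such path consists of exactly two directed edges, so the graph has depth two, and it inherits acyclicity from Lemma~\ref{lemma:dag_depth_3} because the strictly-decreasing-distance-to-exit property of the parent graph is preserved on the subgraph.

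For the second part I would lean on the vertex typing fixed in the definition of a road lane network: fork and merge vertices are \emph{internal} points associated with the intersection, and are therefore by definition distinct from entry and exit vertices. A hypothetical depth-one representation would consist solely of single edges $v_{entry} \to v_{exit}$ and hence contain no internal vertex at all; with no internal vertex available to serve as a point of outdegree $\ge 2$ or indegree $\ge 2$, such a graph cannot encode any fork or merge, and thus fails to represent an intersection whatsoever. It follows that every path realizing a point intersection must traverse at least one internal vertex, forcing a length of at least two edges. Combining this lower bound with the first part shows depth two is simultaneously achievable and necessary, so it is the most compact representation, which also recovers the point-intersection case as the promised DAG of depth two.

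I expect the main obstacle to be the second part, and specifically the ruling out of depth one. The natural objection is that one might try to place the branch point at the entry vertex itself, giving that vertex outdegree $\ge 2$ and producing a depth-one graph $v_{entry} \to v_{exit_1}$, $v_{entry} \to v_{exit_2}$; the proof must forbid this by appealing to the definitional separation between entry/exit vertices and internal fork/merge vertices. Making that appeal precise, and tying it explicitly to the clause ``retaining the notion of intersection regions'' in the overarching claim, is where care is required, since the argument is semantic rather than purely combinatorial.
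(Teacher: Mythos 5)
Your proposal is correct, and its first half is essentially the paper's own (implicit) argument: the paper never writes a standalone proof of this corollary, leaving it to follow from the proof of Lemma~\ref{lemma:dag_depth_3}, whose clause that each lane is associated with ``one fork node and a merge node, \emph{or only one of the nodes}'' covers exactly the point-intersection case and yields the three-vertex, two-edge path $v_{entry} \to v_{fork} \to v_{exit}$ (resp. $v_{entry} \to v_{merge} \to v_{exit}$) that you describe, with acyclicity inherited as you note. Where you go beyond the paper is the second half: the paper asserts ``which is the most compact representation'' with no argument at all, whereas you supply a lower bound by showing a depth-one graph contains no internal vertex and hence no vertex that can carry the fork/merge structure required by the definition of an intersection. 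Your anticipated objection --- that one could place the branch point at the entry vertex itself --- is the right thing to worry about, and your resolution via the definitional separation between entry/exit vertices and internal intersection vertices is consistent with the paper's Definition of a road lane network (entry/exit vertices are boundary points of the scene, fork/merge vertices are internal points); this is also what ties the claim to the phrase ``retaining the notion of intersection regions,'' since collapsing the fork into the entry vertex would erase the spatial delimitation of the intersection. In short: same specialization of Lemma~\ref{lemma:dag_depth_3} as the paper intends, plus a minimality argument the paper omits; the extra rigor is a genuine improvement rather than a deviation.
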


\begin{corollary}
 Road scenes with an intersection consisting of two or more incoming and outgoing road lanes can be represented by a DAG of depth three, which is the most compact representation.
\end{corollary}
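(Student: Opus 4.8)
The plan is to prove the corollary in two halves --- \emph{achievability} of depth three and \emph{minimality} of depth three --- under the standing requirement that the representation ``retain the notion of intersection regions.'' Achievability is immediate from the preceding results: by Lemma~\ref{lemma:dag_depth_3} and the first corollary, each road lane through an intersection decomposes as entry edge, intersection edge, exit edge, so any intersection with two or more incoming and outgoing lanes admits a depth-three DAG. The real content is minimality, which I would cast as the statement that such an intersection cannot be faithfully represented by a DAG of depth less than three.

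First I would fix an arbitrary single intersection with at least two incoming and at least two outgoing road lanes. Because a single intersection is a (weakly) connected subgraph --- distinct intersections being disjoint by definition --- I would argue that both a fork and a merge vertex must occur. Two or more outgoing lanes in a connected acyclic subgraph force at least one branching point, i.e.\ a fork vertex with $deg^{-}(v)\ge 2$; dually, two or more incoming lanes force at least one merge vertex with $deg^{+}(v)\ge 2$. A short argument rules out the degenerate alternatives: forbidding forks collapses the subgraph into disjoint merge-trees, one per exit, contradicting connectivity, and symmetrically for merges.

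The key step is to exhibit, along a single root-to-leaf traversal, four \emph{distinct} structural vertices, forcing three edges. The Intersection definition supplies a directed edge from some fork vertex to some merge vertex; these endpoints are distinct since an identification would turn that edge into a self-loop, contradicting acyclicity (Lemma~\ref{lemma:dag_depth_3}). To separate the fork and merge from the scene boundary I would invoke the Road lane network definition, under which entry and exit vertices are boundary points where lanes enter and leave, whereas fork and merge are \emph{internal} points delimiting the intersection region. Retaining the intersection region therefore forbids identifying the entry vertex with the fork or the exit vertex with the merge, since such a collapse would erase the spatial extent of the intersection. This yields the chain $v_{entry}\to v_{fork}\to v_{merge}\to v_{exit}$ of three edges, so depth three is necessary and, with achievability, exact.

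The main obstacle I anticipate is the minimality argument, and within it the justification that entry/exit cannot be merged into fork/merge --- without this one could propose a depth-one bipartite graph directly joining entries to exits. The decisive lever is the clause ``retaining the notion of intersection regions,'' which I would read as a formal constraint that the four structural point types remain separate vertices; making this reading precise, and contrasting it with the point-intersection corollary (where only a fork \emph{or} a merge is present and depth two suffices), is what pins down why the simultaneous presence of both a fork and a merge is exactly what compels the third edge.
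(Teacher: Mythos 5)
Your proposal is correct and takes essentially the route the paper intends: achievability from Lemma~\ref{lemma:dag_depth_3} together with the entry/intersection/exit edge decomposition, and minimality from the intersection definition plus the ``retaining intersection regions'' requirement. Note that the paper states this corollary with no written proof at all, so your explicit minimality half --- both a fork and a merge forced by connectivity with two or more entries and exits, fork distinct from merge via the self-loop/acyclicity argument, and entry/exit distinct from fork/merge via the boundary-versus-internal-point distinction --- supplies rigor that the paper leaves implicit rather than deviating from it.
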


\section*{ACKNOWLEDGMENT}

This research was supported by Program on Open Innovation Platform with Enterprises, Research Institute and Academia, Japan Science and Technology Agency (JST, OPERA, JPMJOP1612).


\begin{thebibliography}{99}

\bibitem{Paden2016} B. Paden, M. Cap, S. Yong, D. Yershov, and E. Frazzoli, A Survey of Motion Planning and Control Techniques for Self-driving Urban Vehicles, IEEE Transactions on Intelligent Vehicles, 2016.
\bibitem{Claussmann2019} L. Claussmann, M. Revilloud, D. Gruyer, and S. Glaser, A Review of Motion Planning for Highway Autonomous Driving, IEEE Transactions on Intelligent Transportation Systems, 2019.
\bibitem{Sheif2016} H. Sheif, and X. Hu, Autonomous driving in the iCity-HD maps as a key challenge of the automotive industry, Engineering, 2016.
\bibitem{Kuutti2018} S. Kuutti, S. Fallah, K. Katsaros, M. Dianati, F. Mccullough, and A. Mouzakitis, A Survey of the State-of-the-Art Localization Techniques and Their Potentials for Autonomous Vehicle Applications, IEEE Internet of Things Journal, 2018.
\bibitem{Wolcott2017} R. Wolcott, and R. Eustice, Robust LIDAR localization using multiresolution Gaussian mixture maps for autonomous driving, The International Journal of Robotics Research, 2017.
\bibitem{Levinson2010} J. Levinson, and S. Thrun, Robust vehicle localization in urban environments using probabilistic maps, IEEE International Conference on Robotics and Automation, 2010.
\bibitem{Urmson2009} C. Urmson, et al., Autonomous Driving in Urban Environments: Boss and the Urban Challenge, Journal of Field Robotics, 2009.
\bibitem{Nilsson2016} J. Nilsson, J. Silvlin, M. Brannstrom, E. Coelingh, and J. Fredriksson, If, When, and How to Perform Lane Change Maneuvers on Highways, IEEE Intelligent Transportation Systems Magazine, 2016.
\bibitem{Bansal2018} M. Bansal, A. Krizhevsky, and A. Ogale, ChauffeurNet: Learning to Drive by Imitating the Best and Synthesizing the Worst, Robotics: Science and Systems, 2019.
\bibitem{Schulz2018} J. Schulz, C. Hubmann, J. L{\"o}chner, and D. Burschka, Multiple Model Unscented Kalman Filtering in Dynamic Bayesian Networks for Intention Estimation and Trajectory Prediction, International Conference on Intelligent Transportation Systems (ITSC), 2018.
\bibitem{Djuric2020} N. Djuric, V. Radosavljevic, H. Cui, T. Nguyen, F. Chou, T. Lin, N. Singh, and J. Schneider, Uncertainty-aware Short-term Motion Prediction of Traffic Actors for Autonomous Driving, IEEE Winter Conference on Applications of Computer Vision (WACV), 2020.
\bibitem{Salzmann2019} T. Salzmann, J. Thomas, T. Kuehbeck, J. Sung, S. Wagner, and A. Knoll, Online Path Generation from Sensor Data for Highly Automated Driving Functions, IEEE Intelligent Transportation Systems Conference (ITSC), 2019.
\bibitem{Iesaki2019} H. Iesaki, S. Naruse, T. Hirakawa, T. Yamashita, and H. Fujiyoshi, Automatic Creation of Path Information on Digital Map, IEEE Intelligent Transportation Systems Conference (ITSC), 2019.
\bibitem{Zhao2019} J. Zhao, X. He, J. Li, T. Feng, C. Ye, and L. Xiong, Automatic Vector-Based Road Structure Mapping Using Multibeam LiDAR, IEEE International Conference on Intelligent Transportation Systems (ITSC), 2018.
\bibitem{Guo2016} C. Guo and K. Kidono and J. Meguro and Y. Kojima and M. Ogawa and T. Naito, A Low-Cost Solution for Automatic Lane-Level Map Generation Using Conventional In-Car Sensors, IEEE Transactions on Intelligent Transportation Systems, 2016.
\bibitem{Homayounfar2018} N. Homayounfar, W. Ma, S. Lakshmikanth, and R. Urtasun, Hierarchical Recurrent Attention Networks for Structured Online Maps, IEEE/CVF Conference on Computer Vision and Pattern Recognition (CVPR), 2018.
\bibitem{Homayounfar2019} N. Homayounfar, J. Liang, W. Ma, J. Fan, X. Wu, and R. Urtasun, DAGMapper: Learning to Map by Discovering Lane Topology, IEEE/CVF International Conference on Computer Vision, 2019.
\bibitem{Amini2019} A. Amini, G. Rosman, S. Karaman, and D. Rus, Variational End-to-End Navigation and Localization, International Conference on Robotics and Automation (ICRA), 2019.
\bibitem{Ort2018} T. Ort, L. Paull, and D. Rus, Autonomous Vehicle Navigation in Rural Environments Without Detailed Prior Maps, IEEE International Conference on Robotics and Automation (ICRA), 2018.
\bibitem{PerezHigueras2018} N. P\'{e}rez-Higueras, F. Caballero, and L. Merino, Learning Human-Aware Path Planning with Fully Convolutional Networks, IEEE International Conference on Robotics and Automation (ICRA), 2018.
\bibitem{Barnes2016} D. Barnes, W. Maddern, and I. Posner, Find your own way: Weakly-supervised segmentation of path proposals for urban autonomy, IEEE International Conference on Robotics and Automation (ICRA), 2016.
\bibitem{Wang2019} Z. Wang, B. Liu, S. Schulter, and M. Chandraker, A Parametric Top-View Representation of Complex Road Scenes, IEEE/CVF Conference on Computer Vision and Pattern Recognition (CVPR), 2018.
\bibitem{Kunze2018} L. Kunze, T. Bruls, T. Suleymanov, and P. Newman, Reading between the Lanes: Road Layout Reconstruction from Partially Segmented Scenes, IEEE Intelligent Transportation Systems Conference (ITSC), 2018.
\bibitem{Geiger2014} A. Geiger, M. Lauer, C. Wojek, C. Stiller, and R. Urtasun, 3D Traffic Scene Understanding From Movable Platforms, IEEE Transactions on Pattern Analysis and Machine Intelligence, 2014.
\bibitem{Karlsson2020} R. Karlsson, and E. Sjoberg, Learning a Directional Soft Lane Affordance Model for Road Scenes Using Self-Supervision, IEEE Intelligent Vehicles Symposium (IV), 2020.
\bibitem{Prokudin2018} S. Prokudin, P. Gehler, and S. Nowozin, Deep Directional Statistics: Pose Estimation with Uncertainty Quantification, ECCV, 2018.
\bibitem{Nava2019} M. Nava, J. Guzzi, R. Chavez-Garcia, L. Gambardella, and A. Giusti, Learning Long-Range Perception Using Self-Supervision from Short-Range Sensors and Odometry, IEEE Robotics and Automation Letters, 2019.
\bibitem{Manwani2013} N. Manwani and P. S. Sastry, Noise Tolerance Under Risk Minimization, IEEE Transactions on Cybernetics, 2013.
\bibitem{Mitchell1982} T. Mitchell, Generalization as search, Artificial Intelligence, 1982.
\bibitem{Hart1968} P. Hart, N. Nilsson, and B. Raphael, A Formal Basis for the Heuristic Determination of Minimum Cost Paths, IEEE Transactions on Systems Science and Cybernetics, 1968.
\bibitem{MIRE} Model Inventory of Roadway Elements MIRE 2.0., U.S. Department of Transportation, Federal Highway Administration, 2017.
\bibitem{Ulbrich2015} S. Ulbrich, T. Menzel, A. Reschka, F. Schuldt, and M. Maurer, Defining and Substantiating the terms Scene, Situation, and Scenario for Automated Driving. IEEE Intelligent Transportation Systems Conference (ITSC), 2015.

\end{thebibliography}
\end{document}